\documentclass[conference]{IEEEtran}
\IEEEoverridecommandlockouts
\usepackage{cite}
\usepackage{amsmath,amssymb,amsfonts}
\usepackage{algorithmic}
\usepackage{graphicx}
\usepackage{textcomp}
\usepackage{multirow}
\usepackage{xcolor}
\usepackage{amsthm}
\usepackage{adjustbox}
\usepackage[english]{babel}
\usepackage[T1]{fontenc}
\usepackage[utf8]{inputenc}

\newtheorem{theorem}{Theorem}
\def\BibTeX{{\rm B\kern-.05em{\sc i\kern-.025em b}\kern-.08em
    T\kern-.1667em\lower.7ex\hbox{E}\kern-.125emX}}
\begin{document}

\title{Can NetGAN be improved on short random walks?\\
}

\author{\IEEEauthorblockN{Jalilifard A.; Caridá V.F.; Mansano A.F.; Cristo R.S.}
\IEEEauthorblockA{\textit{Data Science Team - Digital Customer Service} \\
\textit{Itaú Unibanco, São Paulo, Brazil}\\
amir.jalilifard; vinicius.carida; alex.mansano; rogers.cristo}@itau-unibanco.com.br
}

\maketitle

\begin{abstract}
Graphs are useful structures that can model several important real-world problems. Recently, learning graphs has drawn considerable attention, leading to the proposal of new methods for learning these data structures. One of these studies produced NetGAN, a new approach for generating graphs via random walks. Although NetGAN has shown promising results in terms of accuracy in the tasks of generating graphs and link prediction, the choice of vertices from which it starts random walks can lead to inconsistent and highly variable results, specially when the length of walks is short. As an alternative to a random starting, this study aims to establish a new method for initializing random walks from a set of dense vertices. We purpose estimating the importance of a node based on the inverse of its influence over the whole vertices of its neighborhood through random walks of different sizes. The proposed method manages to achieve a significantly better accuracy, less variance and lesser outliers.
\end{abstract}

\begin{IEEEkeywords}
graph learning, density based random walks, NetGAN
\end{IEEEkeywords}

\section{Introduction}
One of the main obstacles in artificial intelligence applications is to discover valuable hierarchical models that represent probability distributions over the types of data encountered~\cite{Bengio2009learning}. The most remarkable results in deep learning have involved discriminative models, often used to map data into high-dimensional, rich sensory input to a class label~\cite{Hinton2012,Krizhevsky2012imagenet}.

Recently, there has been renewed interest in evaluating graphs with machine learning due to a collection of practical applications in which they can be employed. Nonetheless, it is quite challenging to obtain a model that captures all the essential properties of real graphs. As a particular non-Euclidean data structure for machine learning, graph analysis generally concentrates on node classification, link prediction, and clustering~\cite{Zhou2018graph}.

Graphs can be utilized to describe numerous processes across diverse fields, such as social sciences (social networks)~\cite{Hamilton2017inductive,Kipf2016semi}, natural sciences (physical systems~\cite{Sanchez2018graph,Battaglia2016interaction} and protein-protein interaction networks~\cite{Fout2017protein}), knowledge graphs~\cite{Hamaguchi2017knowledge} as well as other researching areas~\cite{Khalil2017learning}.

Regarding graph learning, Graph Neural Networks (GNN)~\cite{Gori2005, Scarselli:2009:GNN:1657477.1657482} are deep learning methods which have been widely applied due to its convincing performance and high interpretability.

As a result of the complications on approximating many intractable probabilistic computations that arise in maximum likelihood evaluation and related strategies, and also due of the difficulty of leveraging the benefits of piece-wise linear units in the generative context, Goodfellow~\cite{Goodfellow2016NIPS2T} has proposed Generative Adversarial Networks (GAN) as a new generative model estimation procedure.

GANs promoted significant advancements in the state-of-the-art over the classic prescribed approaches like Gaussian mixtures~\cite{Blanken2007multimedia}. The method also achieved great results in other scenarios such as image generation and 3D objects synthesis~\cite{Karras2018progressive,Berthelot2017BEGAN,Jiajun2016NIPS}.

Associating the concepts of both GNN and GAN, NetGAN~\cite{Bojchevski2018netgan} was proposed as one of the first methods to produce neural graph generative models. The essential idea behind it is to convert the problem of graph generation into walk generation, employing random walks from a specific graph as data input, and training a generative model using the GAN architecture. The generated graph tends to preserve important topological features of the original graph, including the initial amount of nodes~\cite{Zhou2018graph}.

NetGAN approach offers strong generalization features, as indicated by its competitive link prediction performance on several data-sets. It can further be used for generating graphs well-suited to capture the complex nature of real-world networks~\cite{Bojchevski2018netgan}. On the other hand, in spite of the above-mentioned aspects, the method scalability is related as one of its drawbacks. NetGAN takes numerous generated random walks to generate representative transition counts for large graphs. Accordingly, instead of determining an arbitrary amount of random walks, a probable enhancement of NetGAN would be the adoption of a conditional generator capable of, given a starting node, provide a more even graph walking coverage and better scalability.

In this work we introduce a method to learn graphs through short random walks originated from dense vertices. We specifically emphasis on using short random walks because as the size of random walks is increased, the proportion of number of separate observations \begin{math} n \end{math} to the number of predictors \begin{math} p \end{math} tends to decrease, resulting in over-fitting problem. This is more evident when the training graph has small size. In order to be able to use short random short walks, it is necessary to start them from the vertices which provide more information for the learning algorithm. Thus, we propose a new procedure to compute the importance of a node based on the inverse of its influence over all vertices in its neighborhood, through random walks of varied sizes. Ultimately, the most important nodes are used as the starting points.

\section{Problem statement}\label{sec:problem}
NetGAN initializes its random walks from a set of randomly chosen nodes. Although this approach is appropriate for sampling from different regions of the graph, it also leads to having different results with high variance after each training session. This becomes more noticeable for random walks of short length, producing less generalized trained models. We propose a method for determining the best nodes for initializing random walks instead of choosing them randomly. By limiting the possible start points and adopting better random walk initializers, our method reduces the variability of results and enhances the learning accuracy.

A graph is denoted as \begin{math} G = (V,E,W) \end{math}, where \begin{math} V \end{math} is the set of vertices, \begin{math} E \end{math} represents the edges between the vertices and \begin{math} W \end{math} is the weight of edges. The problem of finding adequate start points for initializing random walks is to find a set of vertices that decrease the entropy of walks which itself results in a higher information gain. A desired set of vertices should achieve a better graph generation accuracy and a training model which is less dependent on the random choice of initializers.

In the problem of finding best random walk initializers, there exist two main issues: (1) measuring the influence of a vertex on a set of neighbors through random walks, and (2) converting this influence to a metric which determines the adequacy of a vertex for initializing a random walk. In the next session, we will explain our proposed method for tackling each of these issues. 

\section{Method}\label{sec:method}
Good initializing points are important for learning the structure of a graph through random walks. In a scenario when the beginning vertex has limited access to its neighborhood through different paths, the amount of information gained while visiting the nodes through distinct random walks is not significant. Instead of selecting the starting point randomly, we follow the motivation of~\cite{Hinneburg1998} for initializing the random walks from graph centroids from density point of view. We assume that if a vertex has several access to a group of graph vertices through \begin{math} l-steps \end{math} of random walks, this vertex is located in a dense region of graph, providing better information for an algorithm like NetGAN. On the other hand, if a vertex is in a sparse region, it has access to less number of regions and as a result supplies less information.

We follow the definition of neighborhood distance proposed in~\cite{Zhou2009}, although, instead of using this distance for finding the closeness of two nodes, we take advantage of the probabilistic characteristic of this distance so as to find the dense regions of the graph. First, the transition probability matrix is calculated as following:

\begin{equation} \label{eq:1}
    P_{v_{i},v_{j}} = \begin{cases}\frac{w_{v_{i},v_{j}}}{w_{v_{i},v_{1}} + w_{v_{i},v_{2}}+ ... + w_{v_{i},v_{m}}} & (V_{i},V_{m}) \epsilon E_{G} \\0 & otherwise \end{cases}
\end{equation}

where \begin{math} w_{v_{i},v_{m}} \end{math} is the weight of the edge between \begin{math} v_{i} \end{math} and \begin{math} v_{m} \end{math} and \begin{math} m \end{math} is the number of neighbors of \begin{math} v_{i} \end{math}. Here, we assume that all the weights are equal to 1, although the proposed method is applicable for weighted graphs.

Having the transition probability, the distance between two nodes is defined as

\begin{equation} \label{eq:2}
    d_{v_{i},v_{m}} = \sum_{l:v_{i}\rightsquigarrow v_{m}} p(l)c(1-c)^{l}
\end{equation}
 where \begin{math} l \end{math} is the length of random walk path from \begin{math} v_{i} \end{math} to \begin{math} v_{m} \end{math}, \begin{math} c \end{math} is the probability of returning to the initial state and \begin{math} p \end{math} is the probability of reaching from \begin{math} v_{i} \end{math} to \begin{math} v_{m} \end{math}.

Then the neighborhood random walk distance matrix on a structure graph is
\begin{equation} \label{eq:3}
    R_{v_{i},v_{m}} = \sum_{\eta=0}^l P_{v_{i},v_{m}}^{\eta}c(1-c)^{\eta} 
\end{equation}

\begin{theorem}
Given a graph \begin{math} G \end{math} and a set of vertices \begin{math} V \end{math}, if vertex \begin{math} v_{i} \end{math} is located in a denser region than \begin{math} v_{j} \end{math}, initiating a random walk from the first results in a lower entropy than the later, carrying more information.
\end{theorem}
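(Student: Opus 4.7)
The plan is to reformulate the claim as a comparison of Shannon entropies of the distributions induced by the random-walk matrix $R$ from equation~(\ref{eq:3}), and then invoke standard monotonicity properties of entropy. First, for every starting vertex $v$ and fixed walk length $l$, I would define the row-normalized distribution $q_v(u) = R_{v,u} / \sum_{u'} R_{v,u'}$. Since $R$ aggregates the probability of reaching each vertex via walks of up to $l$ steps weighted by the restart factor $c(1-c)^{\eta}$, $q_v$ is a bona fide probability measure on the neighborhood of $v$. The entropy of the walk starting at $v$ is then $H(v) = -\sum_u q_v(u) \log q_v(u)$, and the theorem reduces to showing $H(v_i) \le H(v_j)$ whenever $v_i$ lies in a denser region than $v_j$.

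The key technical step is to formalize the notion of a \emph{denser region} in terms of the shape of $q_v$. Following the intuition from Section~\ref{sec:method}, a vertex in a dense region admits many short redundant paths into the same cluster of neighbors, so the contributions $P^{\eta}_{v_i,u}$ reinforce each other on a small effective subset of vertices; the geometric decay $c(1-c)^{\eta}$ further amplifies this by privileging the short recurrent loops typical of dense subgraphs. The result is a sharply peaked $q_{v_i}$. A vertex in a sparser region has fewer accessible neighbors at each step, so its walk spreads mass more uniformly over a larger set of rarely-visited vertices, producing a flatter $q_{v_j}$. I would make this precise by showing that $q_{v_i}$ majorizes $q_{v_j}$ in the Hardy--Littlewood--P\'olya sense; once majorization is in hand, Schur-concavity of Shannon entropy yields $H(q_{v_i}) \le H(q_{v_j})$ at once, which matches the information-gain interpretation asserted in the theorem (lower entropy at the starting vertex gives a more predictable, and therefore more informative, training signal for NetGAN).

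The main obstacle is exactly this majorization step: turning an informal density notion into a verifiable ordering of probability vectors. In pathological configurations a dense cluster can still leak probability mass along long outgoing chains and break strict majorization between $q_{v_i}$ and $q_{v_j}$. I would address this by restricting to the regime where density coincides with a high local clustering coefficient, so that the short-range contributions in equation~(\ref{eq:3}) dominate, or, failing that, by weakening the conclusion to a second-order stochastic dominance or an expected-entropy bound averaged over the $l$-step neighborhood of $v_i$. Either refinement is still sharp enough to justify the dense-vertex initialization heuristic proposed in the remainder of the paper.
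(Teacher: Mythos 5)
The decisive step of your argument is missing. Reformulating the claim as a comparison of Shannon entropies of row-normalized distributions $q_v(u)=R_{v,u}/\sum_{u'}R_{v,u'}$ is fine, but the assertion that ``$v_i$ in a denser region'' implies $q_{v_i}$ majorizes $q_{v_j}$ is exactly the content of the theorem, and you never establish it: you describe the intuition, acknowledge that ``a dense cluster can still leak probability mass along long outgoing chains and break strict majorization,'' and then propose to repair this either by adding a clustering-coefficient hypothesis or by weakening the conclusion to stochastic dominance or an averaged entropy bound. Each of those repairs proves a different statement from the one asserted, so what you have is a proof plan with the central inequality left open, not a proof.

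Moreover, the heuristic you rely on for that missing step is doubtful and runs in the opposite direction from the monotonicity the paper actually uses. With the unit weights assumed below equation~(\ref{eq:1}), a vertex of higher degree spreads its one-step mass over \emph{more} neighbors, so its transition row is flatter, not sharper; the paper's proof is built on precisely this observation, namely its inequality~(\ref{eq:9}): a denser vertex has a larger weight sum, hence \emph{smaller} per-edge transition probabilities, hence smaller path products and smaller random-walk distances in the sense of~(\ref{eq:2}), hence smaller influence values~(\ref{eq:4}), and the entropy it considers is then formed over the density/influence values~(\ref{eq:5}) rather than over a row-normalization of $R$ as in your proposal. (The paper's own argument is itself informal --- $f_{B}^{D}$ is not a normalized distribution, and smaller values of it do not automatically yield a smaller $-\sum f\log f$ --- but the relevant point for your proposal is that the ``sharply peaked $q_{v_i}$ in dense regions'' premise is unsupported and in tension with the degree-based inequality the paper exploits.) To salvage your route you would need a genuine argument, for instance an explicit majorization proof on a precisely defined class of dense versus sparse neighborhoods, before Schur-concavity of entropy can be invoked.
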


\begin{proof}
The entropy of an event is defined as

\begin{equation}
    S = - \sum_{i} P_{i}\log P_{i}
\end{equation}

For a sub-graph \begin{math} S = G(V_{s}, E_{s}) \end{math} the density of \begin{math} S \end{math} is defined to be

\begin{equation} \label{eq:8}
d(S) = \frac{\mid E_{s} \mid}{\mid V_{s} \mid}    
\end{equation}

If vertex \begin{math} v_{i} \end{math} is located in a denser sub-graph than another vertex, say \begin{math} v_{j} \end{math}, \begin{math} v_{i} \end{math} has higher degree than \begin{math} v_{j} \end{math}. Based on the equations \ref{eq:1} and the definition of density function in \cite{Zhou2009}, denser nodes have access to more other vertices through distinct random walks, meaning that the sum of all degrees in a dense region is higher than the sparse one. Therefore, if \begin{math} m \end{math} and \begin{math} n \end{math} are number of neighbors of \begin{math} v_{i} \end{math} and \begin{math} v_{j} \end{math} respectively, therefore
\begin{equation} \label{eq:9} w_{v_{i},v_{1}} + ... + w_{v_{i},v_{m}} > w_{v_{j},v_{1}} + ... + w_{v_{j},v_{n}} 
    \Rightarrow P_{v_{i},v_{m}} < P_{v_{j},v_{n}}
\end{equation}

From equation (\ref{eq:2}) we defined the distance between two nodes as sum of the transition probabilities of a random walk through all the middle edges. Thereby, considering (\ref{eq:9})

 \begin{equation} \label{eq:3}
    \sum_{\eta=0}^l c(1-c)^{\eta} \times \prod_{k=0}^m P_{v_{i},v_{k}} <  \sum_{\eta=0}^l c(1-c)^{\eta} \times \prod_{k^{'}=0}^n P_{v_{i},v_{k^{'}}}
\end{equation}

The influence of vertex \begin{math} v_{i} \end{math} on another vertex \begin{math} v_{m} \end{math} is defined as

\begin{equation} \label{eq:4}
    f_{B}^{v_{m}}(v_{i}) = 1 - e^{ - \frac{d_{v_{i},v_{m}}}{2\sigma^2}}
\end{equation}
 where \begin{math} f_{B}^{v_{m}}(v_{i}) \epsilon \space [0,1] \end{math}.
 
\vspace{1cm}

 The density function is calculated as following
 \begin{equation} \label{eq:5}
    f_{B}^{D}(v_{i}) = \sum_{v_{i} \epsilon V } 1 - e^{ - \frac{d_{v_{i},v_{m}}}{2\sigma^2}}
\end{equation}

Based on equation (\ref{eq:4}), the influence metric is proportional to the random walk distance between two nodes. Then the lesser the  sum of all the random walk distances of \begin{math} v_{i} \end{math} the denser it is. Since we defined the distance as the probability of successive walks, the entropy of a random walk is calculated as 

\begin{equation}
    S = - \sum_{i} f_{B}^{D}(v_{i})\log f_{B}^{D}(v_{i})
\end{equation}

Consequently, the denser a vertex, the lower the distance and entropy and as result the more information gain.
\end{proof}

Notice that by accessibility we mean the probability of walking on different paths through a random walk. If a node is located on a sparse region, the probability of walking on the same randomly chosen path is more than a denser region. Also, we defined the density in terms of accessibility through random walks of length \begin{math} l \end{math}. Without this constraint, the theory would be wrong (consider a dense node in a small component that is disconnected from a large component or perhaps connected to a large component via a single link and consider a sparse node in the large component which is connected to a node in a dense region).

\section{Results and discussion}\label{sec:results}
In this section, we evaluate the quality of generated graphs via random walks started from dense nodes in terms of link prediction precision and ROC curve. We compare all the results with those of NetGAN. For all the experiments, we followed all the preprocessing steps of the original paper. The CORA-ML \cite{mccallum2000automating} was used in all the conducted experiments. As in~\cite{Bojchevski2018netgan}, we treat the graph as undirected. We randomly used one of the connected components of CORA-ML with 303 vertices and tested our algorithm against the performance of original NetGAN when applied on the same graph.

In order to find the best nodes to start a random walk, we first sampled 100 random walks from each vertex in order to find the graph paths. After forming the transition and the distance matrix, the densities were calculated and then the vertices were ordered in ascending fashion. Since NetGAN's authors did not inform the motivation of the choice of the configurations they used for analyzing the results, we first tested the original NetGAN with diverse configurations. We found out that the choice of batch size, length of random walks and the initial vertices of random walk can significantly change the results. The more the batch size and the higher the length of the random walk, the higher the precision. 

Although there is a high correlation between the performance of NetGAN with batch size and random walk length separately, we could not find any meaningful correlation between batch size and random walk directly. This comes from the fact that based on our tests setting a high random walk length can compensate the small size of batch and vice-versa. Furthermore, when the batch size is chosen to be small, the choice of initial random walk points becomes more crucial. On the other hand, if both random walk length and batch size are set to be big, this may cause over-fitting. Thereby, we chose the configurations that are proportional to the tests carried out by authors of NetGAN (see Figures~\ref{edge_Overlap}~and~\ref{precision_ROC}).

\begin{figure}[!ht]
  \centering
  \includegraphics[width=9cm]{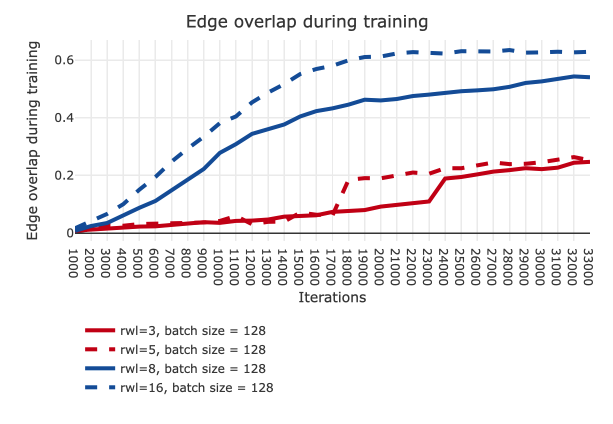}
  \caption{Edge overlap during the training for random walks of different lengths.}
  \label{edge_Overlap}
\end{figure}

\begin{figure}[!ht]
  \centering
  \includegraphics[width=9cm]{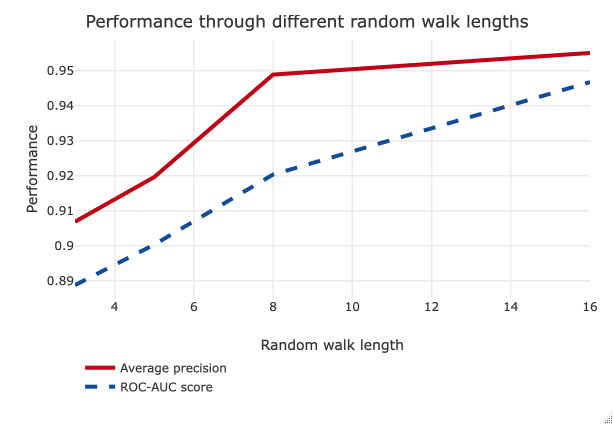}
  \caption{The effect of random walk length on average precision and ROC-AUC score}
  \label{precision_ROC}
\end{figure}

In order to evaluate the proposed method, we calculated the vertex density for random walks of length equal to 2 to 4 and batch size of 13, 19, 25, respectively. As it was mentioned before, these configurations are proportional to the original configuration used by authors in~\cite{Bojchevski2018netgan}.

Unlike the random approach proposed in~\cite{Bojchevski2018netgan}, our approach limits the choice of vertices to those with a higher chance of representing new information through each random walk. As seen in Figure~\ref{boxplot_config}, for all the configurations, the proposed method yields a higher accuracy and lesser variance for the most part. As illustrated, the original method has a poor performance for short random walks, although as the length of walks is increased, the accuracy of original method becomes closer to our method. 

\begin{figure}[ht]
  \centering
  \includegraphics[width=9cm]{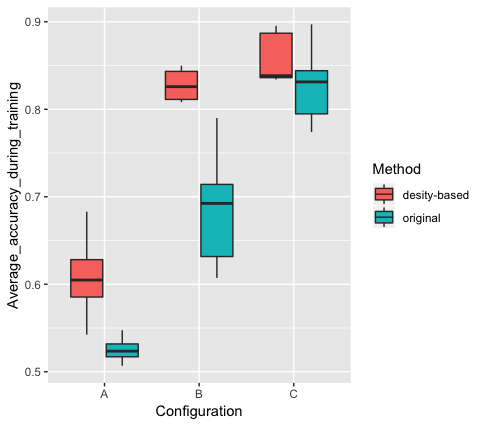}
  \caption{The accuracy for both randomly and density based chosen initial vertices for several configurations (A: batch size = 13, random walk length = 2; B: batch size = 19, random walk length = 3; C: batch size = 25, random walk length = 4;)}
  \label{boxplot_config}
\end{figure}

As shown in Figure~\ref{overlap_boxplot_config}, our method also results in a better edge overlap during the training and a less average variance through a variety of configurations.

\begin{figure}[ht]
  \centering
  \includegraphics[width=9cm]{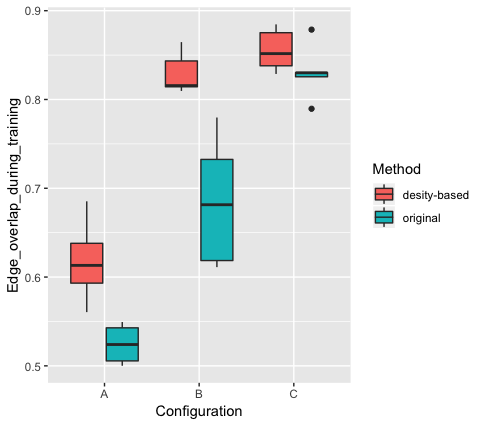}
  \caption{The edge overlap for both randomly and density based chosen initial vertices for several configurations(A: batch size = 13, random walk length = 2; B: batch size = 19, random walk length = 3; C: batch size = 25, random walk length = 4;)}
  \label{overlap_boxplot_config}
\end{figure}

We finally investigate the performance of our method in terms of ROC-AUC score and average precision score. Based on our results, the proposed method has better average link prediction precision and ROC-AUC scores (see Figure~\ref{ROC_boxplot_config}).

\begin{figure}[ht]
  \centering
  \includegraphics[width=9cm]{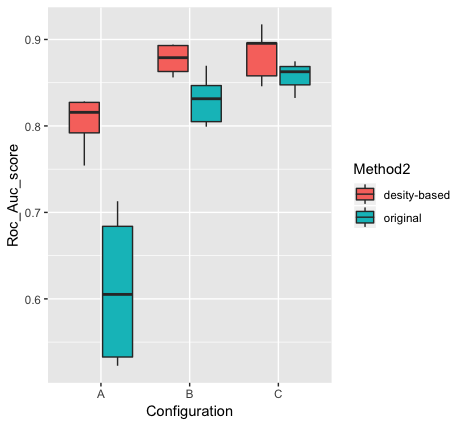}
  \caption{The link ROC-AUC score for both randomly and density based chosen initial vertices for several configurations (A: batch size = 13, random walk length = 2; B: batch size = 19, random walk length = 3; C: batch size = 25, random walk length = 4;)}
  \label{ROC_boxplot_config}
\end{figure}

\begin{figure}[ht]
  \centering
  \includegraphics[width=9cm]{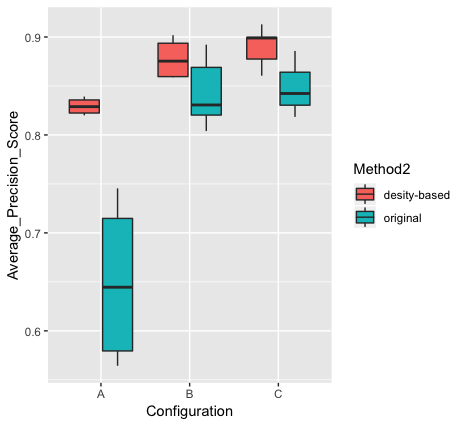}
  \caption{The link prediction precision for both randomly and density based chosen initial vertices for several configurations (A: batch size = 13, random walk length = 2; B: batch size = 19, random walk length = 3; C: batch size = 25, random walk length = 4;)}
  \label{precision_boxplot_config}
\end{figure}

\begin{table*}[ht!]
\centering
\caption{The performance of NetGAN in terms of average accuracy during training, average link prediction and average ROC-AUC scores for different three different configurations of dense and random initializers.}
\begin{adjustbox}{width=1\textwidth,center}
\begin{tabular}{cccccccc} 
\hline
\\[-0.5em]
\multirow{2}{*}{\textbf{Batch size} } & \multirow{2}{*}{\textbf{Random walk length} } & \multicolumn{2}{c}{\textbf{Average training accuracy} } & \multicolumn{2}{c}{\textbf{Average precision} } & \multicolumn{2}{c}{\textbf{Average ROC-AUC} }
\\ 
& & \textit{Random} & \textit{Dense} & \textit{Random} & \textit{Dense} & \textit{Random} & \textit{Dense}
\\
\\[-0.5em]
\hline
\\[-0.5em]
13 & 2 & 0.52 & 0.60 & 0.64 & 0.82 & 0.61 & 0.80
\\
19 & 3 & 0.69 & 0.82 & 0.83 & 0.87 & 0.82 & 0.87
\\
25 & 4 & 0.82 & 0.86 & 0.85 & 0.89 & 0.85 & 0.88
\\
40 & 5 & 0.88 & 0.90 & 0.89 & 0.89 & 0.86 & 0.85
\\[-0.1em]
\hline
\end{tabular}
\end{adjustbox}
\label{tab:table3}
\end{table*}

As shown by several different metrics, starting random walks from dense vertices lead having better performance in NetGAN. Also as illustrated in Table~\ref{tab:table3}, using dense vertices result in significantly better average accuracy during the training phase, average link prediction precision and average ROC-AUC scores. Using the current method, we managed to get link prediction precision up to 92.8\% and an average score of 89\%. Although the objective of the proposed method is to improve the performance of NetGAN for short random walks, we compared the performance of our method with the random initializer approach. Based on our results, when the size of random walks is increased (when the random walk length becomes bigger than 6), the performance of the random approach in terms of average ROC-AUC and average link prediction precision is slightly better compared to our method. Nevertheless, the performance of our method in terms of average accuracy during the training phase is always better independent of the size of the random walk.

Although this was not our intention in this research, there are two possible ways to improve the performance of the current method for random walks of even larger size than 6. As we mentioned earlier, we sampled the graph using 100 random walks of size 8 for each vertices in order to find the possible paths of the graph. This means that our sample is just an approximation of the real graph paths and finding all the paths of size \begin{math} l \end{math} between every two nodes may return a slightly better set of dense vertices and improve the performance of current method for longer random walks. The second possible way is to add a percentage of randomly chosen vertices to the set of dense vertices and also choosing the random walk paths based on the importance of each vertex in terms of its density (for now the probability of each vertex to be chosen is equal for all the vertices existing in the set of \begin{math} n \end{math} most dense vertices). 

\section{Conclusion}\label{sec:conclusion}
NetGAN uses a random strategy to choose vertices that initialize the random walks. This approach raises the variance in accuracy and precision of link prediction and compels the use of more steps in the random walk to compensate the possible poor initializers, thus getting more information through larger walks.

In this paper we develop a new method dedicated to determine better starting points so that NetGAN can get less variant results and more precise predictions by learning graphs via short random walks. Applying the probabilistic distances, we demonstrated that if the random walks starts from denser regions they may have lower entropy, culminating in more information gain. All the vertices are ordered based on its calculated density value. Since denser vertices have an abundance of connections, starting a random walk from those nodes minimizes the chance of reiterating through the same path, and consequently provides more information for NetGAN.

We tested our hypothesis with various configurations and with multiple performance metrics including accuracy, ROC-AUC score, edge overlap during the training and average link prediction precision. Compared with NetGAN random approach, our method had better results in all experiments. More specifically, our results show that starting the random walk from dense vectors significantly increases the accuracy and link prediction precision. Using short random walks not only decreases the training time, but unlike the large random walks it makes the trained model less prone to over-fitting. Nonetheless, the performance of the random strategy becomes closer to our method as the length of random walks increases. One possible future work could concentrate on the investigation of combining some randomly determined vertices with the dense ones in order to have even better sampling strategy.

\section{Conflict of interest}
The current method was proposed and tested by a group of data scientists from Itaú Unibanco. Any opinions, findings, and conclusions expressed in this manuscript are those of the authors and do not necessarily reflect the views, official policy or position of Itaú Unibanco.


\bibliography{bib.bib}{}





\bibliographystyle{IEEEtran}

\vspace{12pt}

\end{document}